\newcolumntype{Y}{>{\centering\arraybackslash}X}
\theoremstyle{plain}
\newtheorem{theorem}{Theorem}[section]
\theoremstyle{definition}
\theoremstyle{remark}
\icmltitlerunning{FPBoost: Fully Parametric Gradient Boosting for Survival Analysis}
\begin{document}

\twocolumn[
\icmltitle{FPBoost: Fully Parametric Gradient Boosting for Survival Analysis}



\icmlsetsymbol{equal}{*}

\begin{icmlauthorlist}
\icmlauthor{Alberto Archetti}{polimi}
\icmlauthor{Eugenio Lomurno}{polimi}
\icmlauthor{Diego Piccinotti}{chattermill}
\icmlauthor{Matteo Matteucci}{polimi}
\end{icmlauthorlist}

\icmlaffiliation{polimi}{Department of Electronics, Information, and Bioengineering, Politecnico di Milano, 20133 Milan, Italy}
\icmlaffiliation{chattermill}{Chattermill}

\icmlcorrespondingauthor{Alberto Archetti}{alberto.archetti@polimi.it}

\icmlkeywords{Machine Learning, Survival Analysis, Gradient Boosting, XGBoost, FPBoost}

\vskip 0.3in
]



\printAffiliationsAndNotice 

\begin{abstract}
Survival analysis is a statistical framework for modeling time-to-event data. It plays a pivotal role in medicine, reliability engineering, and social science research, where understanding event dynamics even with few data samples is critical. Recent advancements in machine learning, particularly those employing neural networks and decision trees, have introduced sophisticated algorithms for survival modeling. However, many of these methods rely on restrictive assumptions about the underlying event-time distribution, such as proportional hazard, time discretization, or accelerated failure time. In this study, we propose FPBoost, a survival model that combines a weighted sum of fully parametric hazard functions with gradient boosting. Distribution parameters are estimated with decision trees trained by maximizing the full survival likelihood. We show how FPBoost is a universal approximator of hazard functions, offering full event-time modeling flexibility while maintaining interpretability through the use of well-established parametric distributions. We evaluate concordance and calibration of FPBoost across multiple benchmark datasets, showcasing its robustness and versatility as a new tool for survival estimation.
\end{abstract}

\section{Introduction}
\label{sec:introduction}

Survival analysis is a field of statistics that plays a central role in data analysis for healthcare, providing the ability to estimate the timing and associated uncertainty of clinical events. This capability is essential to help physicians make informed safety-critical decisions based on data. Beyond healthcare, survival analysis has found applications in various fields, such as predicting equipment failures in industry or forecasting customer churn in relationship management. This widespread adoption underscores the importance of temporal risk estimation in various real-world scenarios~\cite{wang2019machine}.

The primary objective of survival models is to construct a time-dependent function $S(t|\mathbf{x})$ conditioned on a set of features $\mathbf{x}$, such as clinical indicators for hospitalized patients, known as the survival function. This function represents the probability that an event of interest has not occurred by time $t$, expressed as
\begin{equation*}
S(t | \mathbf{x}) = P(T > t | \mathbf{x}).
\end{equation*}
In practical applications, the event of interest can take several forms. In the healthcare context, for example, it can denote patient mortality, disease recurrence, or hospital discharge. As another example, in customer relationship management it might represent a client's initial purchase~\cite{klein2003survival}.

In order to model the survival function, traditional methods often rely on simplifying assumptions, such as the risk proportion between different subjects being constant over time~\cite{cox1972regression}. These assumptions allow the construction of survival functions from a small set of parameters that can be estimated with statistical methods. While suited for contexts with limited data, these simplifying assumptions constrain generalization in real-world scenarios~\cite{katzman2018deepsurv}. Machine learning techniques have allowed to advance these models by incorporating decision trees and neural networks, significantly enhancing their ability to identify and learn non-linear interactions within high-dimensional features. However, most of these techniques still operate under certain constraints, such as time discretization in neural-based approaches~\cite{kvamme2021continuous} or accelerated failure time in tree-based gradient boosting~\cite{collett2023modelling}.

In this context, we introduce Fully Parametric Gradient Boosting (FPBoost), a novel architecture designed to model hazard functions through the composition of multiple fully parametric hazard functions. Hazard functions are related to survival functions as they measure the instantaneous risk of a subject experiencing the event of interest. FPBoost combines the strengths of tree-based ensemble learning with gradient boosting~\cite{friedman2001greedy,collett2023modelling}, offering a flexible model with robust generalization capabilities and minimal assumptions. Modeling hazard as a weighted sum of multiple, fully-parametric functions, referred to as heads, allows FPBoost to be trained by maximizing the full survival likelihood~\cite{wang2019machine}. This, in turn, removes the need for simplified assumptions such as partial likelihood~\cite{cox1972regression,katzman2018deepsurv} or discrete losses~\cite{kvamme2021continuous}. Additionally, the continuous nature of the learned survival functions ensures a fine-grained estimation of the event distribution, without requiring interpolation techniques~\cite{archetti2023deep,archetti2024bridging}. We theoretically show how this framework makes FPBoost a universal approximator of hazard functions, allowing it to learn in principle any target hazard, provided enough heads. Lastly, gradient-boosted trees applied to tabular data---the most common data format in survival applications---has proven to be still competitive against neural network techniques~\cite{grinsztajn2022tree}, as highlighted by our empirical results.

We evaluate FPBoost in the right-censored, single-event setting, which is the most common application of survival analysis. The performance of FPBoost is benchmarked against state-of-the-art survival models, including both tree-based and neural network-based models~\cite{chen2024introduction}. Performance metrics include the concordance index~\cite{uno2011c} to measure the discrimination capabilities of the model and the integrated Brier score~\cite{graf1999assessment}, tailored for calibration. Our experiments demonstrate that FPBoost outperforms alternative models in both discrimination and calibration in the majority of cases and matches their performance when it does not.

In summary, this study provides the following contributions:
\begin{itemize}
    \item A detailed description of the FPBoost model, motivating the choices behind the composition of its hazard function, and detailing the training procedure based on gradient boosting.
    \item A theoretical analysis on the approximation capabilities of FPBoost showing that it can model, in principle, any target hazard function, provided enough heads.
    \item An extensive empirical analysis providing evidence of our approach's efficacy across a diverse set of datasets and baseline models. The experimental procedure is designed to accommodate the intrinsic variability of survival datasets with low cardinality, providing an accurate estimation of the true generalization performance of each model under consideration.
    \item An open-source Python implementation of FPBoost fully compatible with the \texttt{scikit-survival} library~\cite{sksurv}. This way, the FPBoost algorithm can be directly used as a drop-in replacement in existing pipelines.
\end{itemize}

\section{Background and Related Work}
\label{sec:related-work}

Survival analysis is a field of statistics that focuses on modeling the probability of an event of interest occurring over time for a population. The primary objective of survival models is thus estimating a survival function $S(t|\mathbf{x})$, which measures the probability of surviving, i.e., not experiencing the event up to time $t$ as
\begin{equation*}
    S(t | \mathbf{x}) = P(T > t | \mathbf{x}).
\end{equation*}
Here, $T$ is the time-to-event random variable and $\mathbf{x} \in \mathbb{R}^d$ a $d$-dimensional vector encoding the subject's features. 

The survival function exhibits several key properties. It is monotonically non-increasing, starts at 1 for $t=0$, and asymptotically approaches 0 as $t$ tends to infinity indicating that, given an infinite time frame, all subjects will ultimately experience the event of interest~\cite{klein2003survival}.

A core aspect of survival analysis is the ability to handle censored data. Censoring occurs when subjects do not experience the event of interest within the study period. For these individuals, we can only ascertain that they survived up to a certain time point, but their true event time remains unknown. Consequently, a survival dataset comprises set of $N$ triplets $(\mathbf{x}_i, \delta_i, t_i)$, where:
\begin{itemize}
\item $\mathbf{x}_i$ is the feature vector for subject $i$.
\item $\delta_i$ is a binary indicator of whether the subject experienced the event during the study (1) or was censored (0).
\item $t_i$ is the observed time, corresponding to either the event time or the censoring time.
\end{itemize}
This framework is commonly referred to as right-censored, single-event survival analysis and will be the focus of this work.

\subsection{Non-Parametric Models}

Survival models can be categorized into three groups: non-parametric, semi-parametric, and fully parametric~\cite{wang2019machine}. The group of non-parametric models comprises statistical estimators that provide information about data without any prior assumption on the event distribution. Non-parametric models rely on some notion of similarity between groups of individuals to improve the prediciton complexity. The most common non-parametric model is the Kaplan--Meier (KM) estimator~\cite{kaplan1958nonparametric}, which is often used to plot the general survival behavior of a population. 
In fact, the KM estimator is not conditioned on the subjects'~features as it is tailored to provide aggregate information about the overall event distribution within the data. Another popular non-parametric model is the Random Survival Forest~\cite{ishwaran2008random}, which builds a set of decision trees with the CART~\cite{breiman1984classification} method by maximizing the event distribution difference between nodes according to repeated log-rank tests~\cite{bland2004logrank}. Each leaf contains a non-parametric estimation of the subjects corresponding assigned to that specific terminal node. The final prediction is obtained by averaging the predictions of the trees in the forest.

Finally, BoXHED~\cite{boxhed1,boxhedth,boxhed2} is a nonparametric boosting method for hazard estimation specifically designed to handle time-dependent covariates. Like FPBoost, BoXHED employs gradient boosting to minimize the survival likelihood, but extends beyond static covariates to accommodate time-varying covariate processes. The key difference lies in how the gradient is computed: BoXHED derives its gradient through a smooth convex representation of the likelihood function to accommodate time-varying trajectories, whereas FPBoost leverages the explicit parametric forms of its hazard functions, which promotes interpretable risk components corresponding to known distributions.

\subsection{Semi-Parametric Models}

Semi-parametric models are crucial tools for survival analysis, providing the ability to build survival estimations from a combination of non-parametric and parametric techniques. These models focus on predicting the hazard function, a quantity related to the survival function, which measures the instantaneous risk for subjects that have survived up to time $t$:
\begin{equation*}
    h(t | \mathbf{x}) = \lim_{\Delta t \rightarrow 0} \frac{P(t \leq T < t + \Delta t| T \geq t, \mathbf{x})}{\Delta t}.
\end{equation*}
Differently from the survival function, which is constrained between 0 and 1, the hazard function can take values greater than 1. Additionaly, the survival function is related to the hazard function as
\begin{equation}
    S(t|\mathbf{x}) = \exp(-H(t | \mathbf{x})) = \exp\left(-\int\limits_0^t h(u|\mathbf{x}) du\right)
    \label{eq:h-to-s}
\end{equation}
where $H(t|\mathbf{x})$ is the cumulative hazard function, defined as the integral of the hazard function from 0 to $t$. For this reason, the cumulative hazard diverges for $t\rightarrow\infty$ to allow the survival function to asymptotically approach 0~\cite{wang2019machine}.

The Cox model~\cite{cox1972regression} is a prominent semi-parametric model and serves as a primary baseline for machine learning-based survival analysis. This model relies on two key assumptions: (i) linear dependency between features and risk of experiencing an event (ii) the ratio between hazard functions of different subjects is constant over time. This latter is often referred to as proportional hazard assumption. While potentially limiting when the model is applied to large datasets, these assumptions provide a strong bias, enabling effective generalization even with limited data samples. In particular, the Cox model defines the hazard function as the product of a baseline hazard, $h_0(t)$, and the exponential of a subject-dependent risk factor $r(\mathbf{x})$:
\begin{equation*}
    h(t|\mathbf{x}) = h_0(t)\cdot \exp(r(\mathbf{x})) = h_0(t) \cdot \exp(\beta^T\mathbf{x}),
\end{equation*}
where $h_0(t)$ is a non-parametric hazard estimation common to all samples, such as the Breslow estimator~\cite{breslow1974covariance}. The parameters $\beta$ are trained using the partial log-likelihood loss:
\begin{equation*}
    L = -\frac{1}{N}\sum\limits_{i=1}^N \beta^T\mathbf{x}_i - \log\sum\limits_{j: t_j \geq t_i} \exp(\beta^T \mathbf{x}_j).
\end{equation*}
Several extensions of the Cox model have been proposed, all relying on the proportional hazard assumption and partial log-likelihood optimization.  Among those, CoxBoost~\cite{ridgeway1999state} and XGBoost~\cite{xgboost} optimize the loss using gradient-boosted decision trees to estimate $r(\mathbf{x})$. On the other hand, DeepSurv~\cite{katzman2018deepsurv} replaces the linear dependency between parameters and features with a single-output neural networks.

\subsection{Fully Parametric Models}

Fully parametric survival models estimate the entire survival function using a set of parameters. Historically, these models assumed that the event occurrence followed a particular probability distribution, such as Weibull, LogNormal, or LogLogistic with parameters $\Theta$. Given this assumption, for right-censored single-event survival data, the distribution parameters can be estimated by maximizing the survival likelihood as
\begin{equation}
\hat{\Theta} = \arg\max_{\Theta} \prod\limits_{i=1}^N h(t_i|\Theta)^{\delta_i}S(t_i|\Theta).
\label{eq:lik}
\end{equation}
Building upon standard fully parametric distributions, Deep Survival Machines (DSM)~\cite{nagpal2021deep} propose a parameter estimation neural network to construct a mixture of predefined probability distributions. The final survival function is then computed as a weighted sum of these distributions. DSM is trained using a combination of ELBO losses and a regularization prior loss in a Bayesian framework.

Other popular models do not rely on predefined statistical distributions to construct survival estimations, but leverage neural networks to estimate the event probability directly at a fixed set of time intervals~\cite{kvamme2019time,kvamme2021continuous}. These neural networks have a single output per time bin, representing the event probability for that interval. One such model is DeepHit~\cite{lee2018deep}, a discrete-time survival model consisting of a shared feature extractor followed by event-specific sub-networks estimating the probabilities for each event. While these discrete-time models have shown promising practical results, they struggle with fine-grained or long-term prediction horizons due to their fixed-time nature. To address this limitation, some studies have proposed interpolation techniques between time points~\cite{kvamme2021continuous,archetti2024bridging}.

\section{Fully Parametric Gradient Boosting}
\label{sec:fpboost}

\begin{figure*}[t]
    \centering
    \includegraphics[width=\textwidth]{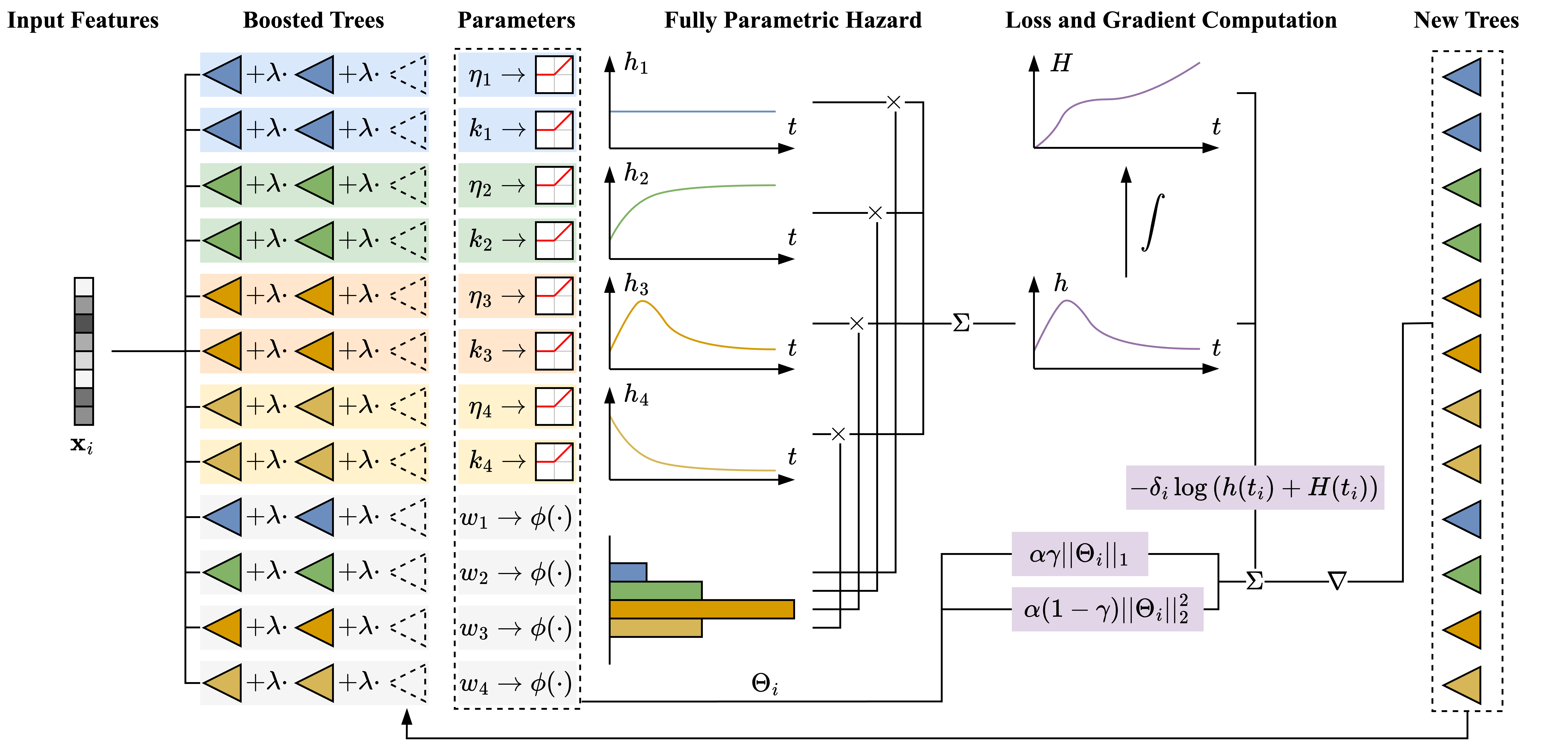}
    \caption{FPBoost architecture example with four heads. A set of trees estimates two distribution parameters, $\eta_j$ and $k_j$, for each of four heads starting from the input features. Heads 1 ({\color{NavyBlue}{blue}}) and 2 ({\color{ForestGreen}{green}}) follow Weibull distributions, while heads 3 ({\color{orange}{orange}}) and 4 ({\color{Dandelion}{yellow}}) follow LogLogistic distributions. An additional set of trees ({\color{Gray}{gray}}) estimates a weight for each head. These heads are combined to form a single hazard function and its corresponding cumulative hazard function. New trees are built by fitting the gradient of the negative log-likelihood and ElasticNet ({\color{Purple}{purple}}).}
    \label{fig:fpboost}
\end{figure*}

FPBoost is a novel survival algorithm based on the weighted sum of fully parametric hazard functions. Parameter estimation is carried out via gradient boosting, optimizing the negative log-likelihood loss function.

\subsection{Hazard Function Definition}

In FPBoost, the hazard function is composed of $J$ heads, each corresponding to a parametric distribution. This work includes heads following either the Weibull or LogLogistic distributions but, in principle, any hazard distribution differentiable with respect to its parameters can be included as a head. Table~\ref{tab:distributions} collects the formulation of the considered distributions. This choice comes primarily from the fact that Weibull distributions are suited to model constant and increasing risks, while LogLogistic ones provide decreasing and arc-shaped risk profiles. Typically, the former is associated to survival behaviors such as aging and wear, while the latter relates to infant mortality. When combined, the resulting risk can assume a bathtub shape, covering a wide range of real-world behaviors~\cite{klein2003survival,nagpal2021deep}. This approach provides an advantage from both a learning and interpretation perspective, as each sample is associated with a set of well-known distribution functions, where each parameter has a clear interpretation. In particular, for both the considered distributions, the $\eta$ parameter controls the scale (or spread) while the $k$ parameter changes the curve shape by modifying the slope or adding a bump.

Figure~\ref{fig:fpboost} depicts an example of a 4-headed FPBoost architecture with heads 1 and 2 following a Weibull distribution and heads 3 and 4 a LogLogistic distribution. The distribution parameters are estimated from the input features $\mathbf{x}$ using a set of regression trees. Additionally, another set of trees estimates a weight $w_j$ for each head.
Thus, the hazard function of FPBoost is defined as
\begin{equation*}
h(t|\Theta) = \sum\limits_{j=1}^J w_j h_j(t | \eta_j,k_j)
\end{equation*}
where $w_j$ are the learned head weights and $\eta_j,k_j$ are the parameter estimations from the regression trees for the $j$-th head. To improve readability, we define $\Theta$ as the vector containing all the distribution parameters $\eta_j,k_j$ and head weighting factors $w_j$ for each of the $J$ heads. 

\begin{table}[t]
\caption{Analytical expression of the hazard and cumulative hazard of the Weibull and LogLogistic distributions. Both of these distributions depend on two parameters, a scale parameter $\eta$ and a shape parameter $k$.}
\label{tab:distributions}
\vskip 0.15in
\begin{center}
\begin{small}
\begin{sc}
\begin{tabularx}{\linewidth}{@{}lYc@{}}
\toprule
Distribution & Hazard & Cumulative Hazard \\
\midrule
Weibull & $\eta k t^{k-1}$ & $\eta t^k$ \\
LogLogistic & $\eta kt^{k-1}/(1+\eta t^k)$ & $\log(1+\eta t^k)$\\
\bottomrule
\end{tabularx}
\end{sc}
\end{small}
\end{center}
\vskip -0.1in
\end{table}

To guarantee the validity of the Weibull and LogLogistic hazard formulations, the scale and shape parameters $\eta_j,k_j$ must be nonnegative. To enforce this constraint, we apply a ReLU activation function~\cite{nair2010rectified} to these parameters, ensuring that any negative estimates are set to zero.

Similarly, the weighting parameters $w_j$ are processed by an activation function $\phi(\cdot)$ before hazard computation. The choice of activation function significantly influences model interpretability and generalizability. In particular, functions that yield nonnegative values, such as ReLU, sigmoid, and softmax, promote interpretability, as they quantify the contribution of each hazard component with a positive weight. This approach allows for meaningful insights into the relative contributions of Weibull and LogLogistic heads in hazard estimation. For instance, a predominance of Weibull-based hazards could indicate a higher susceptibility to aging-related failures, whereas a stronger presence of LogLogistic hazards might suggest risks associated with early failure or infant mortality.


The use of activation functions $\phi(\cdot)$ that allow for negative weights, despite making interpretation less straightforward, enables a a broader modeling capacity, as demonstrated by the following result:
\begin{theorem}
\label{thm:weibull_universal_approx}
Let $\mathcal{H}$ denote the space of hazard functions, that is, continuous nonnegative real functions $h(t)$ for which $\int_0^\infty h(t)\,dt=\infty$. 
For any $h^\star \in \mathcal{H}$, any $\varepsilon > 0$, and any interval $[0, T]$, there exists a finite collection of $J$ Weibull hazard functions $h^W_j(t)$, with parameters 
$\eta_j,k_j$, and weights $w_j$, such that
\[
\sup_{t \in [0,T]} \biggl|\,h^\star(t) \;-\; \sum\limits_{j=1}^J w_j\,h^W_j(t)\biggr| \;<\; \varepsilon.
\]
\end{theorem}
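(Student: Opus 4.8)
The plan is to reduce the statement to the classical Weierstrass approximation theorem. The key observation is that a Weibull hazard whose shape parameter is a positive integer is, up to a positive constant, a monomial: from Table~\ref{tab:distributions}, $h^W(t\mid\eta,k)=\eta k\,t^{k-1}$, so choosing $k=m+1$ with $m\in\mathbb{Z}_{\ge 0}$ and $\eta=1$ gives $h^W(t\mid 1,m+1)=(m+1)\,t^{m}$. Since the head weights $w_j$ are permitted to be negative here — this is precisely where the unconstrained activation $\phi$ matters — the product $w_j\eta_j k_j$ can be set to any prescribed real number while still keeping $\eta_j,k_j\ge 0$ as demanded by the ReLU constraint. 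Hence the family of finite weighted sums $\sum_{j=1}^{J} w_j h^W_j(t)$ already contains every polynomial in $t$, and the theorem follows from density of polynomials in $C([0,T])$.

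Concretely, I would argue in three steps. First, since $h^\star$ is continuous on the compact interval $[0,T]$, invoke Weierstrass to obtain a polynomial $p(t)=\sum_{m=0}^{M}a_m t^m$ with $\sup_{t\in[0,T]}|h^\star(t)-p(t)|<\varepsilon$. Second, realize $p$ exactly: take $J=M+1$ heads and, for $m=0,\dots,M$, give head $m+1$ the parameters $\eta_{m+1}=1$, $k_{m+1}=m+1$ and weight $w_{m+1}=a_m/(m+1)$, so that $\sum_{j=1}^{J}w_j h^W_j(t)=\sum_{m=0}^{M}a_m t^m=p(t)$ for every $t\ge 0$. Third, combine: $\sup_{t\in[0,T]}\bigl|h^\star(t)-\sum_{j=1}^{J}w_j h^W_j(t)\bigr|=\sup_{t\in[0,T]}|h^\star(t)-p(t)|<\varepsilon$, which is the claim.

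A few remarks on what is and is not load-bearing. The divergence condition $\int_0^\infty h^\star(t)\,dt=\infty$ in the definition of $\mathcal{H}$ is never used: approximation is required only on the bounded interval $[0,T]$, where nothing beyond continuity of $h^\star$ is needed. Likewise the approximant $\sum_j w_j h^W_j$ need not itself be a legitimate hazard — it may go negative on part of $[0,T]$ — because the theorem asserts only sup-norm proximity, not membership in $\mathcal{H}$. The one genuinely indispensable ingredient is that the weights may take either sign: with $w_j\ge 0$ one reaches only polynomials with nonnegative coefficients, which are not dense in $C([0,T])$, so this freedom is exactly the ``broader modeling capacity'' lost under ReLU, sigmoid, or softmax activations. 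In this sense there is no hard step — the whole content is the reduction to monomials together with Weierstrass. If a self-contained argument avoiding Weierstrass were preferred, the natural substitute is to approximate $h^\star$ by its Bernstein polynomials on $[0,T]$, which are again nonnegative combinations of monomials and so fit the same template.
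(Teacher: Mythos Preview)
Your proof is correct and follows essentially the same route as the paper's: reduce to Weierstrass on $[0,T]$, then realize each monomial $a_m t^m$ exactly as a single Weibull head with integer shape $k=m+1$ and an appropriately signed weight. The additional remarks you make about which hypotheses are actually used (continuity alone, not the divergence condition or nonnegativity of the approximant) and the necessity of signed weights are accurate and go slightly beyond what the paper spells out.
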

The proof, detailed in Appendix~\ref{sec:proofs}, is a direct consequence of the Weierstrass Approximation Theorem, after showing that a single Weibull head is equivalent to a monomial of arbitrary degree. This underpins the capability of FPBoost to approximate any target hazard function with sufficient heads. It is important to note that while only Weibull hazards are required for this theoretical result, incorporating LogLogistic hazards often enhances model performance and makes optimization easier in practical applications.

On a final technical note, allowing negative weights means the final hazard function could potentially be negative for some values of $t$. Although this does not compromise the theoretical result, the FPBoost implementation prevents this issue by clipping the final hazard value to be above zero.

\subsection{Loss Function and Training Procedure}

Since the FPBoost hazard is differentiable with respect to the distribution parameters and weights, the estimators can be trained by minimizing the full negative log-likelihood loss function, which derives from equations~\ref{eq:h-to-s} and~\ref{eq:lik}, without the need of simplifying assumptions:
\begin{equation*}
L_{\text{lik}} = -\frac{1}{N}\sum\limits_{i=1}^N \delta_i \log (h(t_i | \Theta_i)) - H(t_i | \Theta_i).
\end{equation*}
In order to prevent overfitting, we add to $L_{\text{lik}}$ an ElasticNet regularization term~\cite{zou2005regularization} as
\begin{equation*}
    L_{\text{reg}} = \alpha \left(\gamma \left\| \Theta \right\|_1 + (1 - \gamma) \left\| \Theta \right\|_2^2\right)
\end{equation*}
where $\alpha \geq 0$ weighs the regularization contribution to the final loss and $\gamma \in [0, 1]$ controls the ratio between the $L1$ and $L2$ penalties. 

Training is performed using a standard gradient boosting algorithm. Specifically, for each parameter from $\Theta$, being it a scale, shape, or weight, an empty list of trees is initialized. Then, at iteration $m$, each of these lists is populated by a new tree $\tau_j^{(m)}(\mathbf{x})$ fitted on the negative gradient of the loss function, called pseudo-residual. Parameter estimation occurs by summing the estimations of all trees belonging to each list, weighted by a learning rate $\lambda > 0$~\cite{friedman2001greedy}. This way, each iteration produces a set of trees which contribute to loss minimization and consequently to parameter refinement, in a standard gradient boosting fashion. The pseudocode of FPBoost is provided in Algorithm~\ref{alg:fpboost}.

\begin{algorithm}[t]
\caption{FPBoost Training}
\label{alg:fpboost}
\begin{small}
\begin{algorithmic}[1]
\REQUIRE Training data $\{(\mathbf{x}_i, \delta_i, t_i)\}_{i=1}^N$, number of heads $J$, distribution types for each head (Weibull or LogLogistic), number of iterations $M$, learning rate $\lambda$, regularization parameters $\alpha$ and $\gamma$, weight activation function $\phi(\cdot)$

\STATE \textbf{Initialize} $3J$ parameter models with random values: \\
\centerline{${\bigl\{ F_{\eta_j}^{(0)}(\mathbf{x}),\; F_{k_j}^{(0)}(\mathbf{x}),\; F_{w_j}^{(0)}(\mathbf{x}) \bigr\}_{j=1}^J}$}

\FOR{$m = 0$ to $M - 1$}
    \FOR{$j = 1$ to $J$}
        \STATE $\eta_j = \mathrm{ReLU}\bigl(F_{\eta_j}^{(m)}(\mathbf{x})\bigr)$
        \STATE $k_j = \mathrm{ReLU}\bigl(F_{k_j}^{(m)}(\mathbf{x})\bigr)$
        \STATE $w_j = \phi\bigl(F_{w_j}^{(m)}(\mathbf{x})\bigr)$
    \ENDFOR
    \STATE Collect parameters as $\Theta = \{\eta_j, k_j, w_j\}_{j=1}^J$
    \STATE $L_{\mathrm{lik}}=-\frac{1}{N}\sum_{i=1}^{N}\Bigl[
      \delta_i \log \bigl(h(t_i \mid \Theta_i)\bigr)
      - H(t_i \mid \Theta_i)
    \Bigr]$

    \STATE $L_{\mathrm{reg}} = \alpha \Bigl(
      \gamma \|\Theta\|_1 + (1-\gamma)\|\Theta\|_2^2
    \Bigr)$

    \STATE $L \;=\; L_{\mathrm{lik}} \;+\; L_{\mathrm{reg}}$

    \FOR{\textbf{each} parameter model $F_{\cdot_j}^{(m)}(\mathbf{x})$}
        \STATE Compute pseudo-residuals
        ${r_{\cdot_j} =
          -\frac{\partial L\bigl(\mathbf{x}, \delta, t, \Theta\bigr)}
                  {\partial F_{\cdot_j}^{(m)}(\mathbf{x})}}$
        \STATE Fit a regression tree $\tau_{\cdot_j}^{(m)}$ to $r_{\cdot_j}$
        \STATE Update the parameter model: \\
        \centerline{$F_{\cdot_j}^{(m+1)}(\mathbf{x}) = F_{\cdot_j}^{(m)}(\mathbf{x}) + \lambda\;\tau_{\cdot_j}^{(m)}(\mathbf{x})$}
    \ENDFOR
\ENDFOR

\STATE \textbf{Return} parameter models $\bigl\{ F_{\eta_j}^{(M)},F_{k_j}^{(M)},F_{w_j}^{(M)} \bigr\}_{j=1}^J$
\end{algorithmic}
\end{small}
\end{algorithm}

\section{Experiments}
\label{sec:experiments}

This section covers the experimental setup to evaluate the performance of FPBoost alongside the set of baseline survival models.

\subsection{Datasets}
\label{sec:datasets}

To ensure fair evaluation and consistency with similar studies, we selected datasets from well-known benchmarks in survival analysis, covering different conditions like breast cancer, lung cancer, AIDS, and cardiovascular diseases. Specifically, the AIDS~\cite{whas}, Breast Cancer~\cite{breastcancer}, FLCHAIN~\cite{dispenzieri2012use}, GBSG2~\cite{gbsg} and Veterans~\cite{veterans} datasets are provided by the \texttt{scikit-survival}~\cite{sksurv} Python library. The METABRIC~\cite{katzman2018deepsurv} and WHAS~\cite{whas} datasets, instead, are available in the DeepSurv repository~\cite{katzman2018deepsurv} with a predefined test set. Lastly, SUPPORT2~\cite{support} is provided by \texttt{SurvSet}~\cite{drysdale2022survset}. Details on data collection and content overview are provided in Appendix~\ref{sec:data_details}. Table~\ref{tab:datasets} collects the summary statistics of these datasets.

\begin{table}[t]
\caption{Summary statistics of the survival datasets involved in the experiments.}
\label{tab:datasets}
\vskip 0.15in
\begin{center}
\begin{small}
\begin{sc}
\begin{tabularx}{\linewidth}{@{}lYYY@{}}
\toprule
Dataset & Samples & Censoring & Features \\
\midrule
AIDS & 1151 & 91.66\% & 11 \\
Breast Cancer & 198 & 74.24\% & 80 \\
FLCHAIN & 7874 & 72.45\% & 9 \\
GBSG2 & 686 & 56.41\% & 8 \\
METABRIC & 1904 & 42.07\% & 9 \\
SUPPORT2 & 9105 & 31.89\% & 35 \\
Veterans & 137 & 6.57\% & 6 \\
WHAS & 1638 & 57.88\% & 6 \\
\bottomrule
\end{tabularx}
\end{sc}
\end{small}
\end{center}
\end{table}

\subsection{Metrics}
\label{sec:metrics}

We evaluated survival models using the concordance index (C-Index) and the integrated Brier score (IBS). The C-Index~\cite{uno2011c} measures the predictive accuracy of survival models by evaluating the proportion of concordant pairs relative to all comparable pairs within a dataset. A pair of subjects $i$ and $j$ is considered comparable if, given $t_i < t_j$, then $\delta_i = 1$. A pair of comparable subjects is concordant when the predicted mean time aligns with the actual event times.

The Brier score~\cite{graf1999assessment} assesses the calibration of probability estimates over time by computing the weighted squared difference between the binary survival indicator of a subject and the predicted survival probability. The Brier score at time $t$ is defined as:
\begin{equation*}
    \text{BS}(t) = \frac{1}{N}\sum\limits_{i=1}^N w_i(t)(\mathbf{1}(t_i > t) - S(t|\mathbf{x}_i))^2,
\end{equation*}
where $\mathbf{1}(\cdot)$ is an indicator function and $w_i(t)$ is a weighting factor that adjusts the censoring bias. This adjustment is the Inverse Probability of Censoring Weighting (IPCW)~\cite{robins1992recovery,uno2011c}, which assigns weights based on the inverse probability of censoring at a given time $t$.
The overall calibration of a survival model over time is summarized by integrating the Brier score across the entire study period, yielding the Integrated Brier Score (IBS).

\subsection{Experimental Procedure}
\label{sec:experimental-procedure}

Each dataset is split into a training and test set, with the latter accounting for 20\% of the total samples. For datasets coming from the DeepSurv repository, we employ the provided train-test split while for the others we apply a seeded stratified split on the censoring variable. To ensure a robust evaluation, each experiment is run 30 times with different seeds and all subsequent measurements are averaged over all executions. During each of these executions, the training set is further divided into training and validation, to allow for hyperparameter tuning and early stopping. Before training, standard normalization and one-hot encoding are applied to numerical and categorical features, respectively.

In order to validate FPBoost, we compared its performance against several baseline models. For these models, we employ default parameters provided by implementations in Python libraries. Specifically, we utilize \texttt{scikit-survival} for RSF, Cox, and CoxBoost, \texttt{pycox}~\cite{kvamme2019time} for DeepSurv and DeepHit, \texttt{xgboost}~\cite{xgboostdoc} for XGBoost, and \texttt{auton-survival}~\cite{nagpal2022auton} for DSM. 
Following~\cite{katzman2018deepsurv}, neural network architectures for DeepSurv, DeepHit, and DSM comprise three layers with neuron counts of 3, 5, and 3 times the number of features, respectively.

For FPBoost, we conduct a hyperparameter search for each dataset, selecting the model with the highest mean C-Index. The search, performed using random search, explores a space including the number of Weibull heads ($\{0, \dots, 32\}$), the number of LogLogistic heads ($\{0, \dots, 32\}$), the number of gradient-boosted trees per parameter ($\{1, \dots, 512\}$), the maximum tree depth ($\{1, \dots, 6\}$), the weights activation functions (ReLU, sigmoid, softmax, hyperbolic tangent, or identity), the boosting learning rate ($[0.01, 1]$), the and ElasticNet loss parameters (${\alpha\in [0, 1]}$ and ${\gamma\in [0, 1]}$). To prevent fully parametric distributions to be affected by different time scales, we normalized the time values for each dataset between 0 and 1. The best hyperparameters for each dataset are provided in Appendix~\ref{sec:hparams}.

The source code for these experiments is available at \url{https://github.com/archettialberto/fpboost}. The \texttt{FPBoost} class implementation is fully compatible with \texttt{scikit-survival}, facilitating its inclusion into existing codebases.

\section{Results}
\label{sec:results}

\begin{table*}[t]
\caption{Test C-Index ($\uparrow$) and 95\% confidence interval for each model and dataset, averaged across 30 seeded splits (same test, different train and validation sets). To enhance readability, all values are scaled by a factor of 100. Best results are highlighted in \textbf{bold}, while the second best are \underline{underlined}. 
}
\label{tab:cid}
\vskip 0.15in
\begin{center}
\begin{small}
\begin{sc}\begin{tabularx}{\linewidth}{@{}lYYYYYYYY@{}}
\toprule
Model & AIDS & Breast & FLCHAIN & GBSG2 & METABRIC & SUPPORT2 & Veterans & WHAS \\
\midrule
Cox & $77.8 \pm 1.9$ & $63.4 \pm 1.9$ & \underline{$93.7 \pm 0.0$} & $69.3 \pm 0.3$ & $63.2 \pm 0.1$ & $82.7 \pm 0.0$ & $75.6 \pm 1.0$ & $81.7 \pm 0.1$ \\
CoxBoost & $76.1 \pm 0.9$ & $60.0 \pm 2.6$ & $93.7 \pm 0.0$ & $68.9 \pm 0.6$ & $63.2 \pm 0.2$ & $83.4 \pm 0.0$ & $72.1 \pm 1.9$ & $85.1 \pm 0.1$ \\
XGBoost & $53.2 \pm 1.5$ & $57.1 \pm 2.7$ & $88.9 \pm 0.1$ & $63.4 \pm 0.9$ & $61.4 \pm 0.5$ & $56.4 \pm 0.7$ & $70.9 \pm 1.5$ & $83.4 \pm 0.3$ \\
RSF & $\mathbf{80.1 \pm 0.8}$ & $58.5 \pm 1.6$ & $93.7 \pm 0.0$ & $68.6 \pm 0.4$ & $61.6 \pm 0.2$ & \underline{$84.2 \pm 0.1$} & \underline{$75.8 \pm 1.0$} & \underline{$85.8 \pm 0.1$} \\
DeepSurv & $70.7 \pm 3.0$ & $64.9 \pm 1.7$ & $93.6 \pm 0.0$ & \underline{$69.5 \pm 0.4$} & \underline{$63.4 \pm 0.2$} & $82.6 \pm 0.1$ & $\mathbf{76.7 \pm 1.0}$ & $83.7 \pm 0.1$ \\
DeepHit & \underline{$78.4 \pm 0.9$} & $64.6 \pm 2.4$ & $93.5 \pm 0.0$ & $65.5 \pm 1.0$ & $61.7 \pm 0.3$ & $82.2 \pm 0.1$ & $72.0 \pm 1.4$ & $82.6 \pm 0.2$ \\
DSM & $76.8 \pm 1.0$ & \underline{$66.2 \pm 0.8$} & $50.0 \pm 0.0$ & $49.9 \pm 0.3$ & $61.3 \pm 0.1$ & $83.6 \pm 0.3$ & $65.4 \pm 0.2$ & $69.7 \pm 0.6$ \\
\midrule
FPBoost & $78.1 \pm 0.7$ & $\mathbf{66.6 \pm 3.2}$ & $\mathbf{93.8 \pm 0.0}$ & $\mathbf{69.7 \pm 0.4}$ & $\mathbf{64.0 \pm 0.1}$ & $\mathbf{84.3 \pm 0.4}$ & $74.8 \pm 1.3$ & $\mathbf{89.0 \pm 0.3}$ \\
\bottomrule
\end{tabularx}
\end{sc}
\end{small}
\end{center}
\vskip -0.1in
\end{table*}

\begin{table*}[t]
\caption{Test IBS ($\downarrow$) and 95\% confidence interval for each model and dataset, averaged across 30 seeded splits (same test, different train and validation sets). To enhance readability, all values are scaled by a factor of 100. Best results are highlighted in \textbf{bold}, while the second best are \underline{underlined}. 
}
\label{tab:ibs}
\vskip 0.15in
\begin{center}
\begin{small}
\begin{sc}\begin{tabularx}{\linewidth}{@{}lYYYYYYYY@{}}
\toprule
Model & AIDS & Breast & FLCHAIN & GBSG2 & METABRIC & SUPPORT2 & Veterans & WHAS \\
\midrule
Cox & $\mathbf{5.8 \pm 0.0}$ & $22.2 \pm 1.2$ & $\mathbf{4.6 \pm 0.0}$ & $17.7 \pm 0.1$ & \underline{$19.9 \pm 0.0$} & $13.2 \pm 0.0$ & $13.4 \pm 0.2$ & $14.0 \pm 0.0$ \\
CoxBoost & $6.2 \pm 0.1$ & $19.8 \pm 0.9$ & $4.7 \pm 0.0$ & \underline{$17.3 \pm 0.2$} & $20.9 \pm 0.1$ & $12.6 \pm 0.0$ & $14.7 \pm 0.7$ & $11.9 \pm 0.1$ \\
XGBoost & $8.5 \pm 0.2$ & $22.8 \pm 0.9$ & $12.1 \pm 0.1$ & $26.9 \pm 0.7$ & $24.3 \pm 0.4$ & $60.6 \pm 0.6$ & $47.4 \pm 1.7$ & $18.7 \pm 0.4$ \\
RSF & \underline{$5.8 \pm 0.0$} & $18.0 \pm 0.3$ & $4.7 \pm 0.0$ & $17.7 \pm 0.2$ & $21.0 \pm 0.1$ & $\mathbf{12.3 \pm 0.0}$ & $\mathbf{12.7 \pm 0.2}$ & \underline{$8.5 \pm 0.1$} \\
DeepSurv & $6.2 \pm 0.1$ & $26.4 \pm 1.1$ & $4.7 \pm 0.0$ & $17.5 \pm 0.1$ & $20.4 \pm 0.1$ & $14.6 \pm 0.1$ & $14.4 \pm 0.2$ & $12.2 \pm 0.1$ \\
DeepHit & $5.8 \pm 0.0$ & $23.9 \pm 1.3$ & $6.3 \pm 0.1$ & $21.4 \pm 0.1$ & $22.7 \pm 0.1$ & $14.7 \pm 0.1$ & $29.4 \pm 0.9$ & $17.2 \pm 0.1$ \\
DSM & $6.2 \pm 0.0$ & \underline{$17.8 \pm 0.0$} & $13.9 \pm 0.0$ & $21.6 \pm 0.0$ & $23.7 \pm 0.0$ & $19.4 \pm 0.3$ & $23.0 \pm 0.1$ & $20.5 \pm 0.0$ \\
\midrule
FPBoost & $6.0 \pm 0.0$ & $\mathbf{17.1 \pm 0.9}$ & \underline{$4.7 \pm 0.0$} & $\mathbf{17.1 \pm 0.2}$ & $\mathbf{19.8 \pm 0.0}$ & \underline{$12.5 \pm 0.1$} & \underline{$12.8 \pm 0.3$} & $\mathbf{8.4 \pm 0.3}$ \\
\bottomrule
\end{tabularx}
\end{sc}
\end{small}
\end{center}
\vskip -0.1in
\end{table*}

This section presents and analyzes the empirical evaluation of FPBoost against classical and state-of-the-art survival models described in Section~\ref{sec:related-work}, specifically RSF, Cox, CoxBoost, DeepSurv, DSM, and DeepHit. Tables~\ref{tab:cid} and~\ref{tab:ibs} report the performance of each model according to the C-Index and IBS metrics, respectively. Additional results and metric summaries across model types are provided in Appendix~\ref{sec:add_results}. For improved readability, all results and metric reports are scaled up by a factor of 100. 

The C-Index results in Table~\ref{tab:cid} demonstrate FPBoost's competitive performance across all datasets, outperforming other models in all cases except for AIDS and Veteran datasets, where it is marginally surpassed by RSF and DeepSurv. Averaging across all the datasets, FPBoost improves the C-Index against the baseline score by 4.6 points. Since, by definition, a random guessing model has a C-Index of 50 and the C-Index metric is evaluated in a $[50,100]$ range, the performance gain of FPBoost on said metric approximately corresponds to a 9\% improvement. When compared specifically to semi-parametric models (Cox, CoxBoost, and DeepSurv), FPBoost's average improvement is 4.1, highlighting the potential limitations of the proportional hazard assumption in capturing complex data patterns. However, the performance gap in favor of FPBoost becomes more pronounced when compared to neural network-based models (DeepSurv, DSM, and DeepHit), showing an average improvement of 5.5. This suggests that neural networks, despite their capacity to learn complex non-linear patterns, may require more sophisticated tuning to prevent overfitting and ensure strong generalization. In such cases, simplifying assumptions can be beneficial in introducing bias, as suggested by DeepSurv surpassing both DeepHit and DSM. Notably, the average improvement of FPBoost over RSF is smaller at 1.5, indicating that non-parametric algorithms with minimal assumptions may be better suited for tree-based learners compared to neural networks.

Table~\ref{tab:ibs} demonstrates the calibration performance of FPBoost according to the IBS metric, corroborating the trends analyzed on C-Index. Here, FPBoost always ranks first or second, with the exception of the AIDS dataset. Averaging across all datasets, the improvement in IBS of FPBoost is 2.8. Since, by definition, a random guessing model has an IBS of 25 and the IBS metric is evaluated in a $[0,25]$ range, the raw score improvement translates to an approximate 11\% improvement. We opted for excluding XGBoost from these calculations, given its outlier performance with respect to the alternatives. Consistently with the C-Index results, the IBS improvement relative to proportional hazard models is 1.7, increasing to 4.5 for neural network-based models, while the difference with tree-based models is smaller at 0.7. These findings are in line with C-Index, indicating that model performance on survival concordance is reflected also on probability calibration.

In summary, these empirical results showcase the competitive performance of FPBoost against various classical and state-of-the-art models, both tree-based and neural network-based. Performance improvements are evident in terms of both concordance and calibration with respect to neural alternatives. These findings suggest that tree-based nature of FPBoost, combined with direct optimization of the survival likelihood, represents a promising approach for developing more complex, competitive, and adaptable survival models.

\subsection{Discussion and Future Work}

While the idea of leveraging a mixture of parametric functions~\cite{nagpal2021deep} has been previously explored in the survival literature, as well as ensemble learning~\cite{ishwaran2008random,archetti2023scaling}, FPBoost introduces several innovations.

Firstly, the weighted sum is applied directly to the hazard function, unlike previous works that applied it to the survival function. The advantage of this formulation is twofold. On the one hand, summation on Weibull hazards guarantees universal approximation, provided enough heads. 
On the other hand, a direct comparison with DSM suggests that learning parameters directly in hazard-space rather than explicitly weighting survival distributions can be beneficial for a more effective training.


Secondly, FPBoost directly maximizes the survival likelihood, without relying on simplified custom loss functions such as the partial likelihood of proportional models or discrete loss functions of neural-network-based models. This is possible due to the assumption that the global hazard function is a composition of differentiable parametric hazard functions. This aspect, combined with the tree-based nature of the algorithm, contributed the most to the empirical results obtained. In fact, ensembles of decision tree have been historically extremely effective in dealing with tabular data so far, even against neural networks and deep learning~\cite{grinsztajn2022tree}.

Future work could explore the theoretical bounds and practical limits of the approximation capabilities of FPBoost, building upon the results presented in this study to improve its mathematical grounding. Another aspect worth investigating is the application of FPBoost to competing risks scenarios, for example by delegating separate sets of heads to different events of interest, as in DeepHit. On top of that, investigating model performance on larger, more diverse and multimodal datasets beyond the healthcare context could further validate its practical utility. Finally, its inclusion in federated learning scenarios can be beneficial for applications where data scarcity and privacy hinder the results and applicability of existing models.

\section{Conclusion}
\label{sec:conclusion}

In this study, we introduced FPBoost, a model for survival analysis that leverages a weighted sum of parametric hazard functions optimized through gradient boosting. Our approach addresses several limitations of existing models by avoiding restrictive assumptions such as proportional hazards, accelerated failure time, or discrete time estimations. On top that, FPBoost is proven to be a universal approximator of hazard functions, guaranteeing maximum modeling flexibility. 
The extensive evaluation of FPBoost across diverse datasets demonstrated its competitive concordance and calibration performance compared to classical and state-of-the-art survival models, including both tree-based and neural network-based approaches. These results highlight the potential of combining parametric hazard functions with ensemble learning techniques in survival analysis, alongside direct optimization of the survival likelihood.

\section*{Impact Statement}

FPBoost has the potential to significantly improve survival estimates, outperforming current methods and offering improved reliability. This advancement can help physicians make more informed decisions by providing robust insights that complement clinical expertise. Its application supports better risk assessment, a cornerstone of data-driven medicine that enables early intervention, personalized treatment and optimal resource allocation to improve patient outcomes and healthcare efficiency. We emphasize, however, that statistical models should not replace expert judgment, but rather serve as complementary tools that reinforce a data-driven yet ethically responsible approach to risk assessment.

\section*{Acknowledgements}
\label{sec:acknowledgements}
This paper is supported by the FAIR (Future Artificial Intelligence Research) project, funded by the NextGenerationEU program within the PNRR-PE-AI scheme (M4C2, investment 1.3, line on Artificial Intelligence

\bibliography{references}
\bibliographystyle{icml2025}

\newpage
\clearpage
\appendix

\section{Appendix}

\subsection{Proofs}
\label{sec:proofs}

\begin{theorem}
Let $\mathcal{H}$ denote the space of hazard functions, that is, continuous nonnegative real functions $h(t)$ for which $\int_0^\infty h(t)\,dt=\infty$. 
For any $h^\star \in \mathcal{H}$, any $\varepsilon > 0$, and any interval $[0, T]$, there exists a finite collection of $J$ Weibull hazard functions $h^W_j(t)$, with parameters 
$\eta_j,k_j$, and weights $w_j$, such that
\[
\sup_{t \in [0,T]} \biggl|\,h^\star(t) \;-\; \sum\limits_{j=1}^J w_j\,h^W_j(t)\biggr| \;<\; \varepsilon.
\]
\end{theorem}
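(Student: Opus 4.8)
The plan is to reduce the statement to the classical Weierstrass Approximation Theorem by identifying the family of single weighted Weibull hazards with the family of monomials. The first step is the elementary observation that, for any integer $n\ge 0$, a single Weibull head can realize the monomial $t^n$ exactly: taking $k = n+1 \ge 1$ and $\eta = 1/(n+1) > 0$ gives
\[
h^W(t) \;=\; \eta\,k\,t^{k-1} \;=\; \tfrac{1}{n+1}\,(n+1)\,t^{\,n} \;=\; t^n,
\]
and both parameters $\eta,k$ are nonnegative as required, so $h^W$ is a bona fide Weibull hazard. (For $n=0$ this yields the constant head $h^W\equiv 1$ with $\eta=k=1$; note also that since $k\ge 1$ there is no singularity at $t=0$.) Multiplying by a weight $w_j$ then produces an arbitrary scaled monomial $w_j t^{\,n}$, with $w_j$ allowed to be negative.

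Next I would invoke the Weierstrass Approximation Theorem: since $h^\star$ is continuous on the compact interval $[0,T]$, for the given $\varepsilon>0$ there is a polynomial $p(t) = \sum_{n=0}^{N} c_n\, t^{\,n}$ with $\sup_{t\in[0,T]} |h^\star(t) - p(t)| < \varepsilon$. Combining this with the first step, set $J = N+1$ and, for $j = 0,\dots,N$, let $h^W_j$ be the Weibull head realizing $t^{\,j}$ as above and $w_j = c_j$. Then $\sum_{j} w_j h^W_j(t) = p(t)$ identically on $[0,T]$, and the claimed uniform bound follows immediately.

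A couple of bookkeeping points deserve a sentence in the write-up rather than real work. First, the coefficients $c_n$ of the approximating polynomial may be negative, so the construction genuinely uses the negative-weight regime discussed above; this is consistent with the remark in the paper that negative weights broaden modeling capacity. Second, the finite sum $\sum_j w_j h^W_j$ need not itself lie in $\mathcal{H}$ — it may fail to be nonnegative or to have divergent integral — but the theorem only asks for uniform proximity to $h^\star$ on $[0,T]$, so this is harmless (and in the FPBoost implementation the output is clipped at zero anyway).

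There is no substantial obstacle here: the entire content is the monomial/Weibull correspondence, after which Weierstrass does all the work. If anything, the only point requiring a moment of care is verifying the parameter positivity constraints and the $t=0$ behavior in the monomial identification, which the choice $k=n+1$, $\eta = 1/(n+1)$ handles cleanly.
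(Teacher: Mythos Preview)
Your proposal is correct and follows essentially the same route as the paper's own proof: identify a single Weibull head with parameters $k=n+1$, $\eta>0$ with a monomial $t^n$, then invoke the Weierstrass Approximation Theorem on $[0,T]$ and absorb the polynomial coefficients into the weights $w_j$. The only cosmetic difference is that you fix $\eta=1/(n+1)$ so each head equals $t^n$ exactly and set $w_j=c_j$, whereas the paper leaves $\eta$ free and takes $w_n=a_n/b_n$; your additional remarks on negative weights and the approximant not lying in $\mathcal{H}$ are welcome clarifications absent from the paper's version.
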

\begin{proof}[Proof]
Let $C[0,T]$ be the set of continuous real functions in the interval $[0,T]$ and $\mathcal{H}_T$ the set of functions in $\mathcal{H}$ restricted to the interval $[0,T]$. By construction, $\mathcal{H}_T \subset C[0,T]$. By the Weierstrass Approximation Theorem~\cite{jeffreys1988weierstrass}, for any $\varepsilon > 0$ and any function $f \in C[0,T]$, there exists a polynomial 
\[
P(t) = \sum_{n=0}^N a_n\,t^n,
\]
such that
\[
\sup_{t \in [0,T]} \bigl|f(t) - P(t)\bigr| < \varepsilon.
\]
Hence, we reduce the problem of approximating $h^\star$ uniformly on $[0,T]$ 
to approximating the polynomial $P$. To do this, recall that a Weibull hazard function with parameters $\eta, k$ is given by
\[
h^W(t) \;=\; \eta \, k \, t^{k-1}.
\]
For integer $n = k-1 \ge 0$, $\eta > 0$, and $b=\eta \, (n+1)$, this 
becomes
\[
h^W(t) \;=\; b\,t^n,
\]
which is a monomial in $t$ of degree $n$ with constant multiplicative factor $b$. 
Given the polynomial $P(t)$, consider a weighted sum of $N$ Weibull hazards. By choosing weights $w_n = a_n/b_n$ to match the coefficients of $P(t)$, we have
\[
\sum_{n=0}^N w_n\,h^W_n(t) \;=\; P(t).
\]
Thus, by the Weierstrass argument,
\[
\sup_{t \in [0,T]} \biggl|\,f(t) - \sum_{n=0}^N w_n\,h^W_n(t)\biggr| < \varepsilon.
\]
Since this bound holds for any function $f\in C[0,T]$ and $\mathcal{H}_T \subset C[0,T]$, then the bound holds also for any $h^\star \in \mathcal{H}_T$.
\end{proof}

\subsection{Dataset Details}
\label{sec:data_details}

\begin{figure*}[t]
    \centering
    \includegraphics[width=\textwidth]{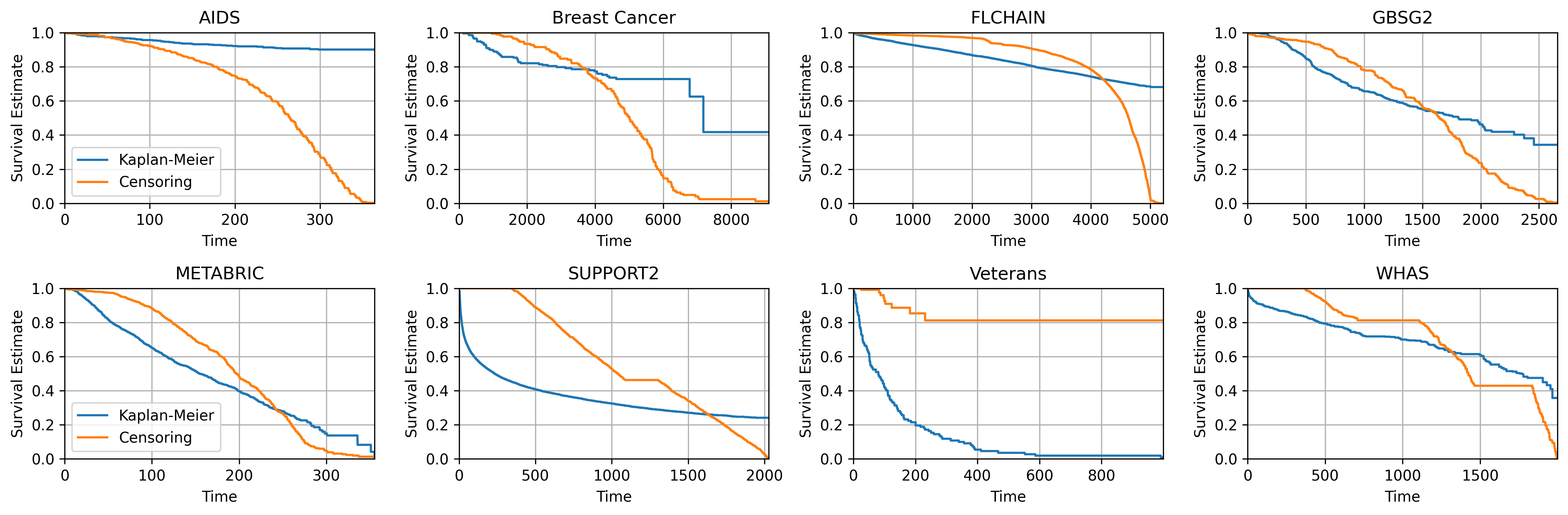}
    \caption{Kaplan--Meier estimations ({\color{NavyBlue}{blue}}) on survival probability and censoring probability ({\color{Orange}{orange}}) for the datasets included in the study.}
    \label{fig:curves}
\end{figure*}

\begin{table*}[t]
\caption{Hyperparameter configuration used in FPBoost across datasets.}
\label{tab:hparams}
\vskip 0.15in
\begin{center}
\begin{small}
\begin{sc}
\begin{tabularx}{\linewidth}{@{}lYcYYYYYYY@{}}
\toprule
Parameter & AIDS & Breast & FLCHAIN & GBSG2 & METABRIC & SUPPORT2 & Veterans & WHAS \\
\midrule
Estimators & 16 & 32 & 32 & 16 & 32 & 16 & 64 & 128 \\
Weibull & 32 & 1 & 64 & 4 & 1 & 4 & 1 & 16 \\
LogLogistic & 4 & 0 & 1 & 8 & 1 & 8 & 4 & 1 \\
Max Depth & 1 & 1 & 3 & 1 & 1 & 3 & 1 & 6 \\
$\lambda$ & 1.0 & 1.0 & 1.0 & 1.0 & 1.0 & 1.0 & 1.0 & 1.0 \\
$\alpha$  & 0.5 & 0.01 & 0.1 & 0.01 & 0.0 & 0.1 & 0.0  & 0.0 \\
$\gamma$  & 0.0 & 0.25 & 0.0 & 0.0  & --  & 0.0 & --   & --  \\
$\phi(\cdot)$ & ReLU & ReLU & ReLU & ReLU & ReLU & ReLU & ReLU & ReLU\\
Initialization & Random & Random & Random & Random & Random & Random & Random & KM \\
Patience & -- & -- & -- & -- & -- & -- & -- & 16 \\

\bottomrule
\end{tabularx}
\end{sc}
\end{small}
\end{center}
\vskip -0.1in
\end{table*}

The AIDS~\cite{whas} dataset originates from a trial comparing three-drug and two-drug regimens in HIV-infected patients. The primary event of interest was the time to an AIDS-defining event or death. The high censoring percentage resulted from the trial being terminated early after reaching a predefined level of statistical significance.

The Breast Cancer~\cite{breastcancer} dataset is derived from a study aimed at validating a 76-gene prognostic signature for predicting distant metastases breast cancer patients. The study includes gene expression profiling of frozen samples from 198 patients. 

The FLCHAIN~\cite{dispenzieri2012use} dataset originates from a study examining the relationship between serum free light chains and mortality in a general population cohort. It includes data from 7874 subjects. The primary endpoint is death, which occurred in 2169 subjects (27.5\%).

The German Breast Cancer Study Group (GBSG2)~\cite{gbsg} dataset targets breast cancer recurrence post-treatment, evaluating hormone therapy's impact on recurrence. Collected from a randomized study in Germany, it includes covariates like age, menopausal status, tumor size, and node status. 

The Molecular Taxonomy of Breast Cancer International Consortium (METABRIC)~\cite{katzman2018deepsurv} aims to understand breast cancer through molecular taxonomy to develop personalized treatments based on tumor genetic profiles. The dataset encompasses a mix of clinical features and genomic data, with a patient cohort from Canada and UK. 

The Study to Understand Prognoses Preferences Outcomes and Risks of Treatment (SUPPORT2)~\cite{support} focuses on critically ill hospitalized patients, conducted in two phases between 1989 and 1997. It covers administrative and clinical follow-up for six months post-inclusion in the study. The version of this dataset used in the experiments includes 35 features.

The Veteran Administration Lung Cancer Trial (Veterans)~\cite{veterans} dataset focuses on lung cancer patients treated with two different chemotherapy regimens. This dataset is frequently used in simple survival benchmarks due to its small sample size. 

The Worcester Heart Attack Study (WHAS)~\cite{whas} deals with cardiovascular health, tracking 1638 patients post-myocardial infarction from 1997 to 2001. It includes biometric parameters and temporal features like hospital stay length and follow-up dates.

Figure~\ref{fig:curves} shows the Kaplan--Meier estimations on survival probability and censoring probability for the datasets included in the study.

\subsection{Hyperparameter Tuning}
\label{sec:hparams}

\begin{table*}[t]
\caption{Test C-TD ($\uparrow$) and 95\% confidence interval for each model and dataset, averaged across 30 seeded splits (same test, different train and validation sets). To enhance readability, all values are scaled by a factor of 100. Best results are highlighted in \textbf{bold}, while the second best are \underline{underlined}. 
}
\label{tab:ctd}
\vskip 0.15in
\begin{center}
\begin{small}
\begin{sc}\begin{tabularx}{\linewidth}{@{}lYYYYYYYY@{}}
\toprule
Model & AIDS & Breast & FLCHAIN & GBSG2 & METABRIC & SUPPORT2 & Veterans & WHAS \\
\midrule
Cox & $77.7 \pm 1.9$ & $63.3 \pm 2.0$ & $94.1 \pm 0.0$ & $66.5 \pm 0.3$ & $\mathbf{66.0 \pm 0.1}$ & $81.4 \pm 0.0$ & $74.2 \pm 1.1$ & $81.6 \pm 0.1$ \\
CoxBoost & $76.3 \pm 0.9$ & $60.3 \pm 2.3$ & \underline{$94.1 \pm 0.0$} & \underline{$67.8 \pm 0.7$} & $62.2 \pm 0.4$ & $82.3 \pm 0.0$ & $70.8 \pm 1.9$ & $85.0 \pm 0.1$ \\
XGBoost & $52.9 \pm 1.5$ & $57.7 \pm 2.8$ & $89.6 \pm 0.1$ & $61.6 \pm 1.1$ & $58.7 \pm 0.6$ & $56.0 \pm 0.7$ & $69.6 \pm 1.6$ & $84.2 \pm 0.3$ \\
RSF & $\mathbf{80.7 \pm 0.8}$ & $58.3 \pm 1.6$ & $94.1 \pm 0.0$ & $67.2 \pm 0.5$ & $61.3 \pm 0.4$ & \underline{$82.9 \pm 0.1$} & \underline{$75.0 \pm 1.0$} & \underline{$87.3 \pm 0.1$} \\
DeepSurv & $70.8 \pm 2.9$ & $65.8 \pm 1.5$ & $94.0 \pm 0.0$ & $67.4 \pm 0.5$ & $65.4 \pm 0.4$ & $81.3 \pm 0.1$ & $\mathbf{75.3 \pm 1.0}$ & $83.7 \pm 0.1$ \\
DeepHit & $78.3 \pm 0.9$ & $65.7 \pm 2.4$ & $93.9 \pm 0.0$ & $63.3 \pm 1.0$ & \underline{$65.6 \pm 0.3$} & $81.1 \pm 0.1$ & $70.5 \pm 1.4$ & $82.4 \pm 0.2$ \\
DSM & $77.3 \pm 1.0$ & $\mathbf{68.5 \pm 0.8}$ & $50.0 \pm 0.0$ & $51.8 \pm 0.2$ & $61.6 \pm 0.2$ & $82.4 \pm 0.3$ & $64.8 \pm 0.2$ & $70.1 \pm 0.6$ \\
\midrule
FPBoost & \underline{$78.5 \pm 0.7$} & \underline{$67.3 \pm 3.2$} & $\mathbf{94.1 \pm 0.0}$ & $\mathbf{69.1 \pm 0.5}$ & $65.6 \pm 0.2$ & $\mathbf{83.0 \pm 0.4}$ & $73.2 \pm 1.4$ & $\mathbf{90.3 \pm 0.3}$ \\
\bottomrule
\end{tabularx}
\end{sc}
\end{small}
\end{center}
\vskip -0.1in
\end{table*}

\begin{table*}[t]
\caption{Test AUC ($\uparrow$) and 95\% confidence interval for each model and dataset, averaged across 30 seeded splits (same test, different train and validation sets). To enhance readability, all values are scaled by a factor of 100. Best results are highlighted in \textbf{bold}, while the second best are \underline{underlined}. 
}
\label{tab:auc}
\vskip 0.15in
\begin{center}
\begin{small}
\begin{sc}\begin{tabularx}{\linewidth}{@{}lYYYYYYYY@{}}
\toprule
Model & AIDS & Breast & FLCHAIN & GBSG2 & METABRIC & SUPPORT2 & Veterans & WHAS \\
\midrule
Cox & $\mathbf{78.9 \pm 2.0}$ & \underline{$63.0 \pm 2.1$} & $95.4 \pm 0.0$ & \underline{$77.8 \pm 0.3$} & \underline{$69.0 \pm 0.1$} & $91.0 \pm 0.0$ & \underline{$86.3 \pm 1.1$} & $84.8 \pm 0.1$ \\
CoxBoost & $76.4 \pm 1.1$ & $60.3 \pm 2.6$ & $95.5 \pm 0.0$ & $76.8 \pm 0.7$ & $65.3 \pm 0.6$ & $91.8 \pm 0.0$ & $81.8 \pm 2.0$ & $88.2 \pm 0.1$ \\
XGBoost & $55.8 \pm 2.1$ & $58.1 \pm 3.0$ & $91.4 \pm 0.1$ & $67.2 \pm 1.3$ & $64.9 \pm 0.8$ & $56.6 \pm 0.8$ & $80.8 \pm 1.7$ & $87.3 \pm 0.4$ \\
RSF & $72.8 \pm 1.6$ & $61.2 \pm 2.2$ & \underline{$95.7 \pm 0.0$} & $76.6 \pm 0.5$ & $67.9 \pm 0.3$ & $\mathbf{91.9 \pm 0.0}$ & $82.9 \pm 0.8$ & \underline{$92.1 \pm 0.1$} \\
DeepSurv & $70.9 \pm 3.2$ & $\mathbf{63.8 \pm 2.2}$ & $95.5 \pm 0.0$ & $77.7 \pm 0.4$ & $68.3 \pm 0.3$ & $90.6 \pm 0.1$ & $\mathbf{87.6 \pm 1.0}$ & $86.7 \pm 0.1$ \\
DeepHit & \underline{$78.4 \pm 1.1$} & $58.9 \pm 2.6$ & $95.5 \pm 0.1$ & $65.9 \pm 1.8$ & $67.4 \pm 0.4$ & $38.0 \pm 0.3$ & $48.0 \pm 4.6$ & $72.1 \pm 0.4$ \\
DSM & $75.4 \pm 1.3$ & $62.8 \pm 0.8$ & $50.0 \pm 0.0$ & $48.6 \pm 0.5$ & $66.8 \pm 0.1$ & $90.8 \pm 0.1$ & $77.4 \pm 0.1$ & $71.5 \pm 0.4$ \\
\midrule
FPBoost & $77.1 \pm 0.9$ & $62.9 \pm 3.8$ & $\mathbf{95.8 \pm 0.0}$ & $\mathbf{77.9 \pm 0.4}$ & $\mathbf{71.4 \pm 0.3}$ & \underline{$91.8 \pm 0.0$} & $86.1 \pm 1.0$ & $\mathbf{92.8 \pm 0.3}$ \\
\bottomrule
\end{tabularx}
\end{sc}
\end{small}
\end{center}
\vskip -0.1in
\end{table*}

The implementation of FPBoost allows to tune the following hyperparameters:
\begin{itemize}
\item \textsc{Estimators}: Maximum number of gradient-boosted trees per estimated parameter. Search values are in the interval $\{1, \dots, 512\}$.
\item \textsc{Weibull}: Number of Weibull heads to include in the architecture. Search values are in $\{0, \dots, 32\}$.
\item \textsc{LogLogistic}: Number of Weibull heads to include in the architecture. Search values are in $\{0, \dots, 32\}$.
\item \textsc{Max Depth}: Maximum decision tree depth, according to the regression tree implementation from \texttt{scikit-learn}. Search values are $\{1, 3, 6\}$.
\item $\lambda$: Learning rate for the gradient boosting algorithm weighting the contribution of each tree. Search valued are in $[0.01, 1]$.
\item $\alpha$: Scaling factor for the ElasticNet loss. Search values are in $[0, \dots, 1]$.
\item $\gamma$: Ratio between $L1$ and $L2$ penalties in ElasticNet. Search values are in $[0, \dots, 1]$.
\item $\phi(\cdot)$: Activation function to apply to the estimated head weights $w_j$. Allowed functions are ReLU, softmax, sigmoid, hyperbolic tangent, and identity.
\item \textsc{Initialization}: How to initialize parameter estimators $F_{\cdot_j}^{(0)}$. If \textsc{Random}, parameters are initialized as $\eta\sim\mathcal{N}(0.5, 1)$, $k\sim\mathcal{N}(0,2)$, $w\sim\mathcal{N}(0,1)$. If \textsc{KM}, instead, parameter initialization is based on the Kaplan--Meier estimator. Specifically, a Weibull and a LogLogistic distribution are fitted to the KM estimator, obtaining $\bar{\eta},\bar{k}$ for both distributions. Then, head parameters are initialized as $\eta\sim\mathcal{N}\left(\bar{\eta},\bar{\eta}/10\right)$ and $k\sim\mathcal{N}\left(\bar{k},\bar{k}/10\right)$. In the \textsc{KM} case, weights are initialized uniformly.
\item \textsc{Patience}: Used to stop training before the execution of $M$ iterations if the validation C-Index does not increase for \textsc{Patience} rounds.
\end{itemize}
Table~\ref{tab:hparams} collects the best hyperparameters selected for each dataset.
Hyperparameters have been optimized on the validation score with random search allowing 1024 maximum trials per dataset. The experiments ran on a machine equipped with an Intel(R) Xeon(R) Gold 6238R CPU @ 2.20GHz with 256GB of RAM running Ubuntu 20.04.6 LTS. 

\subsection{Additional Results}
\label{sec:add_results}

\subsubsection{Results on C-TD and AUC}

\begin{table*}[t]
\caption{Average metrics and 95\% confidence interval across all datasets. To enhance readability, all values are scaled by a factor of 100. Best results are highlighted in \textbf{bold}, while the second best are \underline{underlined}.}
\label{tab:avg}
\vskip 0.15in
\begin{center}
\begin{small}
\begin{sc}
\begin{tabularx}{\linewidth}{@{}XYYYY@{}}
\toprule
Model & C-Index & IBS & C-TD & AUC \\
\midrule
Cox & $75.9 \pm 1.3$ & $13.9 \pm 0.8$ & $75.6 \pm 1.3$ & \underline{$80.8 \pm 1.4$} \\
CoxBoost & $75.3 \pm 1.4$ & $13.5 \pm 0.7$ & $74.9 \pm 1.5$ & $79.5 \pm 1.5$ \\
XGBoost & $66.9 \pm 1.6$ & $27.6 \pm 2.1$ & $66.3 \pm 1.7$ & $70.3 \pm 1.8$ \\
RSF & \underline{$76.0 \pm 1.5$} & \underline{$12.6 \pm 0.7$} & \underline{$75.8 \pm 1.5$} & $80.1 \pm 1.5$ \\
DeepSurv & $75.6 \pm 1.3$ & $14.6 \pm 0.9$ & $75.5 \pm 1.3$ & $80.1 \pm 1.5$ \\
DeepHit & $75.1 \pm 1.4$ & $17.7 \pm 1.0$ & $75.1 \pm 1.3$ & $65.5 \pm 2.2$ \\
DSM & $65.4 \pm 1.4$ & $18.3 \pm 0.7$ & $65.8 \pm 1.4$ & $67.9 \pm 1.7$ \\
\midrule
FPBoost & $\mathbf{77.5 \pm 1.4}$ & $\mathbf{12.3 \pm 0.7}$ & $\mathbf{77.6 \pm 1.3}$ & $\mathbf{82.0 \pm 1.5}$ \\
\bottomrule
\end{tabularx}
\end{sc}
\end{small}
\end{center}
\vskip -0.1in
\end{table*}

This section provides additional results on Time-Dependent Concordance index (C-TD) and AUC~\cite{sksurv}. Specifically, the C-TD computes a concordance index weighting the contribution of each sample with the inverse censoring probability (IPCW)~\cite{robins1992recovery,uno2011c} following the same procedure as the IBS metrics. This reduces the bias in the concordance measure introduced by skewed censoring occurrences. Table~\ref{tab:ctd} collects the experiment results on the C-TD metrics.

Similarly to concordance, the cumulative area under the receiver operating characteristic curve (AUC) evaluates the ability to distinguish between subjects who experience an event by a specific time \( t \) and those who do not. For a given risk score estimation \( r_i \), the AUC at time \( t \) is defined as:
\[
\mathrm{AUC}(t) = \frac{\sum_{i,j} w_i\cdot\mathbf{1}(t_j > t \wedge t_i \leq t) \cdot\mathbf{1}(r_j \leq r_i)} {(\sum_{i} \mathbf{1}(t_i > t)) (\sum_{i} w_i\cdot\mathbf{1}(t_i \leq t))}.
\]
Here, \( \mathbf{1} \) is the indicator function, \( t_i \) denotes the observed time for subject \( i \), and \( w_i \) represents the IPCW. Integrating this metric over time results in a measure of discriminative performance in the presence of censored data. Table~\ref{tab:ctd} collects the experiment results on the AUC metrics.

Finally, Table~\ref{tab:avg} reports the metrics averaged over all datasets, resulting in an aggregated view of the overall model performance on a diverse set of benchmarks.

We opted not to include these additional results in the main analysis as they largely align with the trends observed in the C-index and IBS, which are the most common metrics in survival model assessment, making them somewhat redundant. However, they offer further evidence of FPBoost's consistency across different evaluation criteria, reinforcing the practical reliability its predictions.

\subsubsection{The Impact of Model Type}

While the adoption of FPBoost is generally beneficial in terms of survival metrics, the entity of improvement depends on the type of benchmark considered. As discussed in the results from Section~\ref{sec:results}, the improvement is more noticeable when considering survival models based on neural networks, and shrinks significantly for non-parametric models. Here we report the aggregated comparison over two dimensions, survival model type and estimator type. 

As stated in Section~\ref{sec:related-work}, survival models can be categorized into three types, non-parametric (RSF), semi-parametric (Cox, CoxBoost, XGBoost, DeepSurv), and fully parametric (DeepHit, DSM). Similarly, survival estimations from subject features can have a linear (Cox) or non-linear dependency, as a result of decision trees (CoxBoost, XGBoost, RSF) and neural networks (DeepSurv, DeepHit, DSM). 

Table~\ref{tab:models} summarizes the survival and estimation type of each model considered in this study. Additionally, Tables~\ref{tab:cid_summary}, \ref{tab:ibs_summary}, \ref{tab:ctd_summary}, and \ref{tab:auc_summary} provide the aggregated view of FPBoost improvements with respect to the baselines.

\begin{table}[t]
\caption{Summary of survival model type (non-parametric, semi-parametric, and fully parametric) and estimation type (linear, tree-based, or neural network-based) for each model included in the experiments.}
\label{tab:models}
\vskip 0.15in
\begin{center}
\begin{small}
\begin{sc}\begin{tabularx}{\linewidth}{@{}XcY@{}}
\toprule
Model & Type & Estimator \\
\midrule
Cox & Semi-Parametric & Linear \\
CoxBoost & Semi-Parametric & Tree \\
XGBoost & Semi-Parametric & Tree \\
RSF & Non-Parametric & Tree \\
DeepSurv & Semi-Parametric & Neural \\
DeepHit & Fully Parametric & Neural \\
DSM & Fully Parametric & Neural \\
\midrule
FPBoost & Fully Parametric & Tree \\
\bottomrule
\end{tabularx}
\end{sc}
\end{small}
\end{center}
\vskip -0.1in
\end{table}

\begin{table}[t]
\caption{C-Index average improvement ($\uparrow$) of FPBoost with respect to existing models across all datasets.}
\label{tab:cid_summary}
\vskip 0.15in
\begin{center}
\begin{small}
\begin{sc}\begin{tabularx}{\linewidth}{@{}l|ccc|c@{}}
\toprule
C-Index & Linear & Tree & Neural & All \\
\midrule
Non-Parametric & -- & $+1.5$ & -- & $+1.5$ \\
Semi-Parametric & $+1.6$ & $+6.4$ & $+1.9$ & $+4.1$ \\
Fully Parametric & -- & -- & $+7.3$ & $+7.3$ \\
\midrule
All & $+1.6$ & $+4.8$ & $+5.5$ & $+4.6$ \\
\bottomrule
\end{tabularx}
\end{sc}
\end{small}
\end{center}
\vskip -0.1in
\end{table}

\begin{table}[t]
\caption{IBS average improvement ($\downarrow$) of FPBoost with respect to existing models across all datasets. XGB is excluded from the evaluation, given its outlier performance.}
\label{tab:ibs_summary}
\vskip 0.15in
\begin{center}
\begin{small}
\begin{sc}\begin{tabularx}{\linewidth}{@{}l|ccc|c@{}}
\toprule
IBS & Linear & Tree & Neural & All \\
\midrule
Non-Parametric & -- & $-0.3$ & -- & $-0.3$ \\
Semi-Parametric & $-1.5$ & $-1.2$ & $-2.3$ & $-1.7$ \\
Fully Parametric & -- & -- & $-5.6$ & $-5.6$ \\
\midrule
All & $-1.5$ & $-0.7$ & $-4.5$ & $-2.8$ \\
\bottomrule
\end{tabularx}
\end{sc}
\end{small}
\end{center}
\vskip -0.1in
\end{table}

\begin{table}[t]
\caption{C-TD average improvement ($\uparrow$) of FPBoost with respect to existing models across all datasets.}
\label{tab:ctd_summary}
\vskip 0.15in
\begin{center}
\begin{small}
\begin{sc}\begin{tabularx}{\linewidth}{@{}l|ccc|c@{}}
\toprule
C-TD & Linear & Tree & Neural & All \\
\midrule
Non-Parametric & -- & $+1.8$ & -- & $+1.8$ \\
Semi-Parametric & $+2.0$ & $+7.1$ & $+2.2$ & $+4.6$ \\
Fully Parametric & -- & -- & $+7.2$ & $+7.2$ \\
\midrule
All & $+2.0$ & $+5.3$ & $+5.5$ & $+4.9$ \\
\bottomrule
\end{tabularx}
\end{sc}
\end{small}
\end{center}
\vskip -0.1in
\end{table}

\begin{table}[t]
\caption{AUC average improvement ($\uparrow$) of FPBoost with respect to existing models across all datasets.}
\label{tab:auc_summary}
\vskip 0.15in
\begin{center}
\begin{small}
\begin{sc}\begin{tabularx}{\linewidth}{@{}l|ccc|c@{}}
\toprule
AUC & Linear & Tree & Neural & All \\
\midrule
Non-Parametric & -- & $+1.8$ & -- & $+1.8$ \\
Semi-Parametric & $+1.2$ & $+7.1$ & $+1.8$ & $+4.3$ \\
Fully Parametric & -- & -- & $+15.3$ & $+15.3$ \\
\midrule
All & $+1.2$ & $+5.3$ & $+10.8$ & $+7.1$ \\
\bottomrule
\end{tabularx}
\end{sc}
\end{small}
\end{center}
\vskip -0.1in
\end{table}

\end{document}